\newtheorem{definition}{Definition}
\newtheorem{theorem}{Theorem}
\title{UNSUPERVISED DISTANCE METRIC LEARNING FOR ANOMALY DETECTION OVER MULTIVARIATE TIME SERIES}
\name{Hanyang Yuan$^{1}$, Qinglin Cai$^{2}$, Keting Yin$^{1}$}
\address{$^{1}$ School of Software Technology, Zhejiang University, Hangzhou, China \\
$^{2}$Donghai Laboratory, Zhoushan, China}
\begin{document}
\ninept
\maketitle
\begin{abstract}
Distance-based time series anomaly detection methods are prevalent due to their relative non-parametric nature and interpretability. However, the commonly used Euclidean distance is sensitive to noise. While existing works have explored dynamic time warping (DTW) for its robustness, they only support supervised tasks over multivariate time series (MTS), leaving a scarcity of unsupervised methods. In this work, we propose FCM-\emph{w}DTW, an unsupervised distance metric learning method for anomaly detection over MTS, which encodes raw data into latent space and reveals normal dimension relationships through cluster centers. FCM-\emph{w}DTW introduces locally weighted DTW into fuzzy C-means clustering and learns the optimal latent space efficiently, enabling anomaly identification via data reconstruction. Experiments with 11 different types of benchmarks demonstrate our method’s competitive accuracy and efficiency.
\end{abstract}
\begin{keywords}
Anomaly Detection, Multivariate Time Series, Unsupervised Learning, Dynamic Time Warping
\end{keywords}

\section{Introduction}
\label{sec: intro}
The anomaly detection for multivariate time series presents a significant challenge for the intricate correlation and dependency among dimensions. It has been studied in various application fields, e.g., financial fraud detection\cite{huang2018codetect}, network intrusion detection \cite{kumar2018anomaly}, and industrial control fault diagnosis \cite{garg2021evaluation}. Anomaly is classically defined as \emph{an observation that deviates significantly from other observations, raising suspicions that it was generated by a different mechanism} \cite{hawkins1980identification}. Generally, MTS-generating systems remain in stable normal status, characterized by consistent relationships among dimensions. The goal of multivariate anomaly detection is to identify abnormal relationships that disrupt normal status.

Most existing anomaly detection methods are semi-supervised or unsupervised \cite{schmidl2022anomaly, blazquez2021review}, as anomaly labels are typically rare and costly.
While the former methods train models on the normal samples and mark the points or subsequences going against the model as anomalous, the latter separate anomalies from the normal part by frequency, shape, or distribution. The state-of-the-art deep learning methods for multivariate anomaly detection are mostly semi-supervised \cite{schmidl2022anomaly}. Comparatively, they need prior knowledge of data for training, limiting their availability in practical application. 
Furthermore, in terms of recent survey works \cite{schmidl2022anomaly, audibert2022deep,kim2022towards}, including  \cite{schmidl2022anomaly}
that estimates 71 anomaly detection methods on 976 time series datasets, deep learning methods have yet to prove real outperformance, while they have relatively high complexity and poor interpretability. In contrast, the conventional statistical analysis and machine learning methods are simpler, lighter, and easier to interpret.

Among existing works, distance-based anomaly detection methods are particularly prevalent due to their non-parametric property and strong interpretability \cite{schmidl2022anomaly}. They are mostly based on nearest neighbors or clustering \cite{breunig2000lof, ramaswamy2000efficient, li2021clustering}, where the choice of distance metric plays a crucial part. However, the commonly used Euclidean distance is noise-sensitive and may negatively affect performance. To address this issue, DTW has been successfully applied to anomaly detection on univariate time series \cite{benkabou2018unsupervised}. Nevertheless, extending DTW to MTS remains a challenge due to the complex dependency among dimensions, which affects the local pointwise measure and the whole precision further. Previous attempts involve taking advantage of the PCA similarity factor \cite{banko2012correlation} or parameterized distance metric \cite{cai2015piecewise, mei2015learning, shen2017novel, li2021clustering} as the pointwise measure of DTW. However, they only support supervised tasks, and no sound unsupervised distance metric learning method has been proposed so far.

To bridge this gap, we propose an unsupervised distance metric learning method for DTW in multivariate anomaly detection. To be specific, we enhance fuzzy C-means clustering (FCM) by introducing a locally weighted DTW (\emph{w}DTW) as the distance metric. FCM-\emph{w}DTW encodes raw data into latent space, where cluster centers represent the normal relationships among dimensions of MTS. Then the optimal latent space and \emph{w}DTW are learned by an efficient closed-form optimization algorithm. Finally, anomalies are identified by reconstructing data from the latent space. For the samples far away from any clusters, their reconstruction errors are large and indicate anomalous. Our contributions are summarized as follows:
\begin{itemize}
\setlength{\topsep}{0pt}
\setlength{\itemsep}{0pt}
\setlength{\parsep}{0pt}
\setlength{\parskip}{0pt}
\item	We improve FCM clustering by introducing \emph{w}DTW, which enhances the clustering accuracy over MTS. And build an efficient optimization algorithm for it.
\item 	Based on the learned latent space and \emph{w}DTW, we propose a multivariate anomaly detection method with data reconstruction and anomaly scoring computation.
\item 	Extensive experiments with 11 unsupervised multivariate anomaly detection benchmarks demonstrate the favorable performance of FCM-\emph{w}DTW.
\end{itemize}
\section{Problem statement}
\label{sec: problem}
MTS can be seen as a sequence of sampling vectors over correlated variables. Specifically, a MTS $T$ consists of a sequence of $n$ observations, i.e., $T=\{\boldsymbol{t}_1, \boldsymbol{t}_2, \ldots, \boldsymbol{t}_n\}$, where $\boldsymbol{t}_i \in \mathbb{R}^d$ and $d>1$. The anomaly detection over MTS is formally defined as follows.

\begin{definition}[Multivariate Anomaly Detection] 
\label{def1: problem}
Given a MTS $T$, multivariate anomaly detection aims at computing an anomaly score $s_i$ for each observation $\boldsymbol{t}_i$, such that higher is $s_i$, more anomalous is $\boldsymbol{t}_i$.
\vspace{-0.1in}
\end{definition}
Note that in Definition \ref{def1: problem}, We make no assumptions about whether anomalies are point or subsequence anomalies. If the anomaly scores of continuous observations are high, these observations can be detected as a subsequence anomaly.
\section{Method}
\label{sec: method}
In this section, we first introduce the objective and optimization algorithm of FCM-\emph{w}DTW, providing a comprehensive analysis of the algorithm. Then, we describe the anomaly detection method based on the learned latent space and \emph{w}DTW.
\subsection{Objective of FCM-\emph{w}DTW}
The computation of DTW over MTS contains two layers: the pointwise measure on sampling vectors and the dynamic programming (DP) algorithm over the pointwise cost matrix (PCM). To regulate MTS dimensions, we parameterize DTW by employing the weighted Euclidean distance (WED) as the pointwise measure. Given two $w$-dimensional MTS $X=\{\boldsymbol{x}_1, \ldots, \boldsymbol{x}_i, \ldots, \boldsymbol{x}_m\}$ and $Y$ $=\{\boldsymbol{y}_1, \ldots, \boldsymbol{y}_j, \ldots, \boldsymbol{y}_n\}$, where $\boldsymbol{x}_i=[x_{i1}, x_{i2}, \ldots, x_{iw}]$ and $\boldsymbol{y}_j=[y_{j1},y_{j2}, \ldots, y_{jw}]$ are sampling vectors, and the optimal warping path (OWP) $\boldsymbol{p}=\{p_k \in M \mid p_k=(i, j), 1 \leq i \leq m, 1 \leq j \leq n, 1 \leq k $ $\leq l\}$, where $M \in \mathbb{R}^{m \times n}$ is the PCM and $p_1=(1,1), p_l=(m, n)$, $p_{k+1}-p_k \in\{(1,0),(0,1),(1,1)\}$, by introducing WED, we can get \emph{w}DTW formulated as
\begin{equation}
\mathrm{\emph{w}DTW}(X, Y)=\sum_p \mathrm{wed} (\boldsymbol{x}_i, \boldsymbol{y}_j)=\sum_p \sum_{d=1}^w \lambda_d^q\left(x_{i d}-y_{j d}\right)^2
\end{equation}
where $\mathrm{wed}(\boldsymbol{x}_i, \boldsymbol{y}_j)$ denotes the WED between $\boldsymbol{x}_i$ and $\boldsymbol{y}_j, \lambda_d\in[0,1]$ is the weight of the $d$-th dimension that satisfies $\sum_{d=1}^w \lambda_d=1$, and $q>1$ is a hyper-parameter.

To learn an adaptive \emph{w}DTW for the unsupervised tasks, we introduce it into FCM as the kernel distance metric. In terms of (1), the objective function of FCM-\emph{w}DTW can be formulated as:
\begin{align}
\label{equ: obj}
J(U, V, \Lambda) &=\min \sum_{i=1}^c \sum_{j=1}^n u_{i j}^m \cdot \mathrm{\emph{w}DTW}\left(v_i, x_j\right)  \notag\\
&=\min \sum_{i=1}^c \sum_{j=1}^n \sum_p \sum_{d=1}^w u_{i j}^m \lambda_d^q\left(v_{i d}-x_{j d}\right)^2 \notag\\
\mathrm{st}: 1)\sum_{i=1}^c &u_{i j}=1, \forall j=1,2, \ldots, n, 2)\sum_{d=1}^w \lambda_d=1
\end{align}
where $\Lambda=\left\{\lambda_d \mid d \in[1, w]\right\}$ denotes the set of weight parameters. $V =\left\{v_i \mid i \in[1, c]\right\}$ denotes cluster center set. $U =\left[u_{i j}\right]$ denotes the membership matrix, where $i \in[1, c], j \in[1, n], u_{i j} \in[0,1]$.

\subsection{Optimizing FCM-\emph{w}DTW} 
With the nondifferentiable \emph{w}DTW, it is difficult to optimize the objective function (2) directly. An alternating method is to solve the following four partial optimization sub-problems iteratively: 

\begin{itemize}
\setlength{\topsep}{0pt}
\setlength{\itemsep}{0pt}
\setlength{\parsep}{0pt}
\setlength{\parskip}{0pt}
\item \textbf{Problem 1} Keeping $U, V$, and $\Lambda$ fixed, update OWPs between $X$ and $V$;
\item \textbf{Problem 2} Keeping OWPs, $V$, and $\Lambda$ fixed, update $U$;
\item \textbf{Problem 3} Keeping OWPs, $U$, and $V$ fixed, update $\Lambda$;
\item \textbf{Problem 4} Keeping OWPs, $U$, and $\Lambda$ fixed, update $V$.
\end{itemize}

The four problems iteratively optimize the four factors determining the loss of FCM-\emph{w}DTW, which have explicit meaning. $V$ contains all cluster centers that reveal the normal patterns of MTS and construct the latent space. $U$ contains the encoding feature of samples that can be seen as the proportion of all normal patterns in composing a sample (where $u_{i j} \in[0,1]$ and $\sum_{j=1}^c u_{i j}=1$). $\Lambda$ models the correlation among dimensions of MTS while OWPs regulate the temporal relationship between samples and normal patterns. They make up the distance metric together in the latent space. Problem 1 can be solved by the DP algorithm and Problem $2 \sim 4$ can be solved by the Lagrange multiplier method, where (\ref{equ: obj}) is reformulated as:
\begin{align} 
\label{equ: lagrange}
L = \sum_{i=1}^c \sum_{j=1}^n & u_{i j}^m \cdot \mathrm{\emph{w}DTW}(v_i, x_j) \notag\\
&+\sum_{j=1}^n \delta_j \left(\sum_{i=1}^c u_{ij} - 1 \right)+\mu \left(\sum_{d=1}^w \lambda_d - 1 \right)
\end{align}

\begin{theorem}
\label{thm: 1}
As fixing the OWPs, $V$, and $\Lambda, U$ can be updated by:
\begin{equation} \label{equ: thm1}
u_{i j}=\frac{1}{\sum_{s=1}^c\left[\frac{\mathrm{\emph{w}DTW}\left(v_i, x_j\right)}{\mathrm{\emph{w}DTW}\left(v_s, x_j\right)}\right]^{\frac{1}{m-1}}}
\end{equation}
\end{theorem}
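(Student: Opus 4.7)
The plan is to derive the update formula by applying first-order optimality conditions to the Lagrangian $L$ in equation (\ref{equ: lagrange}), treating the OWPs, $V$, and $\Lambda$ as constants so that $\mathrm{\emph{w}DTW}(v_i, x_j)$ becomes a fixed positive scalar for every pair $(i,j)$. Only the terms involving $u_{ij}$ and the first Lagrange-multiplier block survive differentiation; the $\Lambda$-constraint term drops out and $\mu$ plays no role here.

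First I would compute $\partial L / \partial u_{ij}$ and set it to zero, which yields
\begin{equation*}
m\, u_{ij}^{m-1}\, \mathrm{\emph{w}DTW}(v_i, x_j) + \delta_j = 0,
\end{equation*}
so that $u_{ij} = \bigl(-\delta_j / (m\,\mathrm{\emph{w}DTW}(v_i, x_j))\bigr)^{1/(m-1)}$. Since the problem is convex in $u_{ij}$ when $m>1$ and $\mathrm{\emph{w}DTW}(v_i,x_j)>0$, this stationary point is the unique minimizer subject to the simplex constraint.

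Next I would enforce the constraint $\sum_{i=1}^c u_{ij} = 1$ to eliminate the multiplier $\delta_j$. Factoring $(-\delta_j/m)^{1/(m-1)}$ out of the sum gives
\begin{equation*}
\left(\frac{-\delta_j}{m}\right)^{\frac{1}{m-1}} = \frac{1}{\sum_{s=1}^c \bigl(1/\mathrm{\emph{w}DTW}(v_s, x_j)\bigr)^{1/(m-1)}}.
\end{equation*}
Substituting this back into the expression for $u_{ij}$ and rewriting the ratio of reciprocals as $[\mathrm{\emph{w}DTW}(v_i,x_j)/\mathrm{\emph{w}DTW}(v_s,x_j)]^{1/(m-1)}$ collapses the formula to the stated form (\ref{equ: thm1}).

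The derivation is essentially standard FCM bookkeeping; the only subtlety worth highlighting is the implicit assumption $\mathrm{\emph{w}DTW}(v_s, x_j) > 0$ for all $s$, which is needed both to justify division and to ensure differentiability of the objective in $u_{ij}$. If any $\mathrm{\emph{w}DTW}(v_s,x_j) = 0$, the theorem's formula is undefined and one has to assign $u_{sj}=1$ and $u_{ij}=0$ for $i\neq s$ as a degenerate case, so I would briefly flag this boundary condition rather than treat it as a genuine obstacle.
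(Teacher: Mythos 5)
Your proposal is correct and follows essentially the same route as the paper's own proof: differentiate the Lagrangian in $u_{ij}$, solve the stationarity condition for $u_{ij}$ in terms of $\delta_j$, eliminate $\delta_j$ via the constraint $\sum_{i=1}^c u_{ij}=1$, and substitute back to obtain (\ref{equ: thm1}). Your added remarks on convexity for $m>1$ and on the degenerate case $\mathrm{\emph{w}DTW}(v_s,x_j)=0$ are sensible refinements the paper omits, but they do not change the argument.
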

\begin{proof}  
By (\ref{equ: lagrange}), we have
\begin{equation}
\label{equ: proof1-1}
\frac{\partial L}{\partial u_{i j}}=m u_{i j}^{m-1} \cdot \mathrm{\emph{w}DTW} \left(v_i, x_j\right)+\delta_j
\end{equation}
As the OWPs, $V$, and $\Lambda$ are fixed, the \emph{w}DTW distance is constant. By setting $\frac{\partial L}{\partial u_{i j}}=0$, we have
\begin{equation}
\label{equ: proof1-2}
u_{i j}=\left(\frac{-\delta_j}{m}\right)^{\frac{1}{m-1}}\left[\frac{1}{\mathrm{\emph{w}DTW}\left(v_i, x_j\right)}\right]^{\frac{1}{m-1}}
\end{equation}
By substituting (\ref{equ: proof1-2}) into the first constraint of (\ref{equ: obj}), we have
\begin{equation}
\label{equ: proof1-3}
\left(\frac{-\delta_j}{m}\right)^{\frac{1}{m-1}}=\frac{1}{\sum_{i=1}^c\left[\frac{1}{\mathrm{\emph{w}DTW}\left(v_i, x_j\right)}\right]^{\frac{1}{m-1}}}
\end{equation}
By substituting (\ref{equ: proof1-3}) into (\ref{equ: proof1-2}), the proof is complete.
\end{proof}
\begin{theorem}
\label{thm: 2}
As fixing the OWPs, $U$, and $V$, $\Lambda$ can be updated by
\begin{equation} \label{equ: thm2}
\lambda_d=\frac{1}{\sum_{s=1}^w\left(\frac{A_d}{A_s}\right)^{\frac{1}{q-1}}}
\end{equation}
where $A_d=\sum_i^c \sum_j^n \sum_p u_{i j}^m\left(v_{i d}-x_{j d}\right)^2$ denotes the intra-cluster distance on the $d$-th dimension. 
\end{theorem}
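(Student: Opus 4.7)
The plan is to mirror the Lagrange-multiplier argument used for Theorem~\ref{thm: 1}, exploiting the fact that with the OWPs, $U$, and $V$ held fixed, the objective is a simple separable polynomial in the weights $\lambda_d$. First I would rewrite the \emph{w}DTW term inside the Lagrangian (\ref{equ: lagrange}) by interchanging the sums so that $L$ takes the form $\sum_{d=1}^w \lambda_d^q A_d + \mu\bigl(\sum_{d=1}^w \lambda_d - 1\bigr)$ plus terms that do not depend on $\Lambda$, where $A_d$ is precisely the intra-cluster distance on the $d$-th dimension defined in the statement. This step makes the dependence on each $\lambda_d$ transparent and isolates the relevant constraint.

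Next I would differentiate with respect to $\lambda_d$ to obtain $\partial L/\partial \lambda_d = q\,\lambda_d^{q-1} A_d + \mu$, set it to zero, and solve to get
\begin{equation}
\lambda_d = \left(\frac{-\mu}{q}\right)^{\frac{1}{q-1}} A_d^{-\frac{1}{q-1}}.
\end{equation}
Plugging this expression into the second constraint of (\ref{equ: obj}), i.e., $\sum_{d=1}^w \lambda_d = 1$, determines the multiplier factor $(-\mu/q)^{1/(q-1)} = 1/\sum_{s=1}^w A_s^{-1/(q-1)}$. Back-substituting and rewriting the ratio in the symmetric form $(A_d/A_s)^{1/(q-1)}$ yields (\ref{equ: thm2}), completing the derivation.

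There is no genuine obstacle here: the argument is structurally identical to Theorem~\ref{thm: 1}, with $\Lambda$ and the constraint $\sum_d \lambda_d = 1$ playing the roles previously played by $U$ and $\sum_i u_{ij}=1$. The only bookkeeping items to check are (i) that swapping the order of summation is legitimate and that the coefficient of $\lambda_d^q$ truly collapses to $A_d$, and (ii) that $q>1$ ensures the exponent $1/(q-1)$ is well defined and that the resulting stationary point is a minimum rather than a saddle (the objective in $\lambda_d$ is convex since $A_d \geq 0$ and $q>1$, so first-order conditions suffice). Once those routine points are verified, the closed form follows immediately.
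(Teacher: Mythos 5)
Your proposal is correct and matches the paper's intent exactly: the paper omits the proof, stating only that it "follows a similar procedure to that of Theorem \ref{thm: 1}," and your Lagrange-multiplier derivation (isolating $\sum_d \lambda_d^q A_d$, zeroing $q\lambda_d^{q-1}A_d+\mu$, and eliminating $\mu$ via the constraint $\sum_d\lambda_d=1$) is precisely that procedure. Your added remarks on convexity for $q>1$ and on the legitimacy of interchanging the finite sums are sound bonuses, not deviations.
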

The proof of Theorem \ref{thm: 2} follows a similar procedure to that of Theorem \ref{thm: 1}, and will not be reiterated here. 
\begin{theorem}
\label{thm: 3}
As fixing the OWPs, $U$, and $\Lambda, V$ can be updated by
\begin{equation} \label{equ: thm3}
v_{i d r}=\frac{\sum_{j=1}^n u_{i j}^m \sum_{s=1}^a x_{j d s} \cdot I(r, s)}{\sum_{j=1}^n u_{i j}^m \sum_{s=1}^a I(r, s)}, r=1, \ldots, b
\end{equation}
where $a, b$ are the lengths of $v_i$ and $x_j$ respectively. $I(r, s)$ is an indicator function, $I(r, s)=1$ if $(r, s) \in \boldsymbol{p}$ otherwise $0$, where $\boldsymbol{p}$ is the OWP between $v_i$ and $x_j$.
\end{theorem}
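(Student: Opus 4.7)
The plan is to mimic the Lagrange-multiplier derivations of Theorems~\ref{thm: 1} and~\ref{thm: 2}. Since the objective (\ref{equ: obj}) imposes no constraint on $V$, the Lagrangian (\ref{equ: lagrange}) restricted to $V$ reduces to the FCM-\emph{w}DTW objective itself, so I would obtain the update by zeroing the partial derivative with respect to each scalar center coordinate $v_{idr}$.

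The first step is to rewrite $\mathrm{\emph{w}DTW}(v_i, x_j)$ so that the dependence on $v_{idr}$ becomes explicit. Because the OWP $\boldsymbol{p}$ between $v_i$ and $x_j$ is held fixed (Problem~1 has already been solved), the sum over path elements in (1) can be converted into an ordinary double sum over time positions using the indicator $I(r,s)$ defined in the statement:
\begin{equation*}
\mathrm{\emph{w}DTW}(v_i, x_j) = \sum_{r}\sum_{s} I(r,s) \sum_{d=1}^w \lambda_d^q (v_{idr} - x_{jds})^2.
\end{equation*}
Substituting this into $L$ and differentiating, all terms independent of $v_{idr}$ drop out, leaving
\begin{equation*}
\frac{\partial L}{\partial v_{idr}} = 2\lambda_d^q \sum_{j=1}^n u_{ij}^m \sum_{s} I(r, s)\,(v_{idr} - x_{jds}).
\end{equation*}
Since $\lambda_d^q>0$, setting this to zero reduces to a scalar linear equation in $v_{idr}$, and solving it produces exactly the ratio stated in (\ref{equ: thm3}).

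The main obstacle is notational rather than analytical: one must carefully separate the roles of $i,j$ as cluster/sample indices from those of $r,s$ as time positions inside $v_i$ and $x_j$, and recognize that with OWPs fixed the path sum $\sum_p$ collapses to $\sum_r\sum_s I(r,s)$. Once this rewriting is in place, the derivation is a direct weighted-mean calculation — a natural generalization of the classical FCM center update in which the usual membership weights $u_{ij}^m$ are augmented by the alignment indicators $I(r,s)$ contributed by \emph{w}DTW.
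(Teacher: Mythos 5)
Your proposal matches the paper's own proof: both compute $\partial L/\partial v_{idr}$ with the OWP-sum rewritten via the indicator $I(r,s)$, obtaining $\sum_{j=1}^n u_{ij}^m \lambda_d^q \sum_{s=1}^a 2(v_{idr}-x_{jds})I(r,s)$, then set it to zero and solve the resulting linear equation for $v_{idr}$. Your extra remarks (no constraint on $V$, cancelling $\lambda_d^q$, the weighted-mean interpretation) only make explicit what the paper leaves implicit, so the approach is essentially identical and correct.
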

\begin{proof}
By (3), we have
$$
\frac{\partial L}{\partial v_{i d r}}=\sum_{j=1}^n u_{i j}^m \lambda_d^q \sum_{s=1}^a 2\left(v_{i d r}-x_{j d s}\right) \cdot I(r, s)
$$
By setting $\frac{\partial L}{\partial v_{i d r}}=0$, we can get (\ref{equ: thm3}), and the proof is complete.
\end{proof}

Based on the solutions above, FCM-\emph{w}DTW can be optimized by iterating Problem 1$\sim$4, as shown in Algorithm \ref{alg:1}. Step 1 initializes cluster centers $V$ and the weights of dimensions $\Lambda$. One feasible initialization is to randomly choose $c$ samples from the dataset as the initial $V$. And the weights of dimensions $\Lambda$ can be randomly initialized within the range of [0, 1], while ensuring that they satisfy the condition $\sum_{d=1}^{w}\lambda_d=1$. Steps $2\sim9$ update the corresponding variables, respectively. Step 10 calculates the loss of the objective function and determines if the algorithm converges.

\begin{algorithm}[t] 
    \caption{FCM-\emph{w}DTW}
    \begin{algorithmic}[1]
        \REQUIRE MTS dataset $D = \{x_j|j\in[1, n]\}$, cluster number $c$, fuzzy coefficient $m$, exponent $q$, and loss threshold $\varepsilon$
        \ENSURE Membership matrix $U$, cluster centers $V$, weight parameters $\Lambda$
        \STATE \textbf{initialize} $V$, $\Lambda$
        \REPEAT
        \FOR{$x_i$, $x_j$}
            \STATE Calculate PCM $M_{ij}$
            \STATE Seek for $\mathrm{OWP}_{ij}$ by DP algorithm
        \ENDFOR
            \STATE Update $U$ by (\ref{equ: thm1})
            \STATE Update $\Lambda$ by (\ref{equ: thm2})
            \STATE Update $V$ by (\ref{equ: thm3})
        \UNTIL{$J(U, V, \Lambda) < \varepsilon$}
    \end{algorithmic}
    \label{alg:1}
\end{algorithm}
\setlength{\textfloatsep}{0.1cm}
\setlength{\floatsep}{0.1cm}
\vspace{-0.15in}
\subsection{Algorithm Analysis}
\noindent \textbf{Complexity of FCM-\emph{w}DTW. }
Given the average length of samples $a$, the dimensionality $w(<<a)$, and the number of clusters $c(<<a)$, the computational complexity of seeking for the OWPs in Step 3$\sim$6 is $O\left(ncwa^2\right)$, updating the membership matrix in Step 7 is $O\left(n c^2\right)$, updating weight parameters in Step 8 is $O\left(ncwa+w^2\right)$, and updating cluster centers in Step 9 is $O(ncwa)$. Suppose the algorithm iterates $e$ times, the complexity of the whole algorithm is $O\left(e n c w a^2\right)$.

\noindent \textbf{Hyperparameters analysis. }
FCM-\emph{w}DTW has two hyperparameters, i.e., the fuzzy coefficient $m$ and the exponent $q$ of the weight coefficient. By (\ref{equ: thm1}), as $m \rightarrow 1$, the membership matrix $U$ tends to be sparse and the algorithm tends to the hard clustering, while as $m \rightarrow+\infty, U$ tends to be average and the algorithm tends to the uniform fuzzy clustering. Previous study \cite{pedrycz2008development} has proved through experiments that the optimal values of $m$ are typically lower than the commonly used value of 2.0. Thus, we set $m$ to be within the range of $(1, 2]$. On the other hand, by (\ref{equ: thm2}), a larger weight can strengthen the contribution of the MTS dimension with a smaller intra-cluster distance $A_d$, and vice versa. In terms of this principle, we can investigate the values of $q$ within different ranges. Firstly, as $q=0$, $\lambda_d^q \equiv 1$ and the WED becomes the original Euclidean distance, against our aim of discriminating the MTS dimensions. As $0<q<1$, the larger $A_d$, the larger $\lambda_d^q$, violating the principle above. As $q=1$, the weight coefficient of the dimension with the smallest $A_d$ is one and the others are zero, meaning only a single dimension plays a role in clustering and the information loss would influence the clustering accuracy seriously. As $q>1$ or $q<0$, the larger $A_d$, the smaller $\lambda_d^q$, satisfying the expected principle. Thus, $q$ should be within the range of $(-\infty, 0) \cup(1$, $+\infty)$.
\subsection{Anomaly Detection}
In FCM-\emph{w}DTW, data is projected into latent space constructed by cluster centers, which is composed of the normal patterns only in terms of the proportion of membership degree. Intuitively, if we reconstruct data from this space, the reconstructions are expected to be as close to the cluster centers as possible, and the samples with abnormal components will have large reconstruction errors. Thus, the anomaly score can be computed based on the difference between the sample and its reconstruction.

Based on the optimal cluster centers, partition matrix, and \emph{w}DTW, the reconstructed samples can be obtained by solving the objective function of FCM-\emph{w}DTW directly. Let $\ddot{x}_j$ denote the reconstruction of $x_j$, the objective function of clustering $\ddot{x}_j$ with FCM-\emph{w}DTW can be obtained as follows
\begin{align}
J(U, V, \Lambda)  &=\min \sum_{i=1}^c \sum_{j=1}^n u_{i j}^m \cdot \mathrm{\emph{w}DTW} \left(v_i, \ddot{x}_j\right) 
\notag\\
&=\min \sum_{i=1}^c \sum_{j=1}^n \sum_{p} \sum_{d=1}^w u_{i j}^m \lambda_d^q\left(v_{i d}-\ddot{x}_{j d}\right)^2
\end{align}
By zeroing the gradient of $J$ with respect to $\ddot{x}_j$, we have
\begin{equation}
\ddot{x}_{j d s}=\frac{\sum_{i=1}^c u_{i j}^m \sum_{r=1}^b v_{i d r} \cdot I(r, s)}{\sum_{i=1}^c u_{i j}^m \sum_{r=1}^b I(r, s)}, s=1, \ldots, a
\end{equation}
Then the anomaly score of each sample can be computed as (\ref{equ: anomaly score}), by the \emph{w}DTW distance between the raw data and its reconstruction.
\begin{equation}
\label{equ: anomaly score}
s_j=\mathrm{\emph{w}DTW}\left(x_j, \ddot{x}_j\right)
\end{equation}

\section{EXPERIMENTS}
In this section, we first estimate the performance of FCM-\emph{w}DTW on multivariate anomaly detection, which is conducted on 4 datasets against 11 benchmarks. We then examine the runtime efficiency of the optimization algorithm for FCM-\emph{w}DTW.
\begin{table*}[t]
\centering
\setlength{\tabcolsep}{5pt}
\normalsize
\caption{Anomaly detection accuracy comparison. The optimal results are highlighted in bold.}
\begin{tabular}{l|l|ll|ll|ll|ll}
\hline 
\multirow{2}{*}{Method} & \multirow{2}{*}{Type} &CalIt2& & PCSO5 & & PCSO10 & &PCSO20& \\
\cline { 3 - 10 }
& &   ROC-AUC  &  PR-AUC  &  ROC-AUC &  PR-AUC & ROC-AUC &PR-AUC &  ROC-AUC &PR-AUC \\
\hline
LOF \cite{breunig2000lof}      &Distance      & 0.727 &0.119 & 0.445 & 0.008 &	0.440 &	0.009 &	0.447 &	0.009 \\
KNN	\cite{ramaswamy2000efficient}    &Distance	   & 0.883 &0.264 & 0.681 & 0.014 &	0.752 &	0.019 &	0.599 &	0.011 \\
CBLOF \cite{he2003discovering}	&Distance	   & 0.871 &0.256 & 0.271 & 0.006 &	0.355 &	0.007 &	0.225 &	0.006 \\
HBOS \cite{goldstein2012histogram}	&Distance	   & 0.873 &0.250 & 0.207 &	0.006 &	0.297 &	0.007 &	0.185 &	0.006 \\
COF	\cite{tang2002enhancing}   &Distance	   & 0.839 &0.161 & 0.463 &	0.009 &	0.432 &	0.008 &	0.471 &	0.009 \\
EIF	\cite{hariri2019extended}    &Trees	       & 0.885 &0.259 & 0.948 &	0.083 &	0.847 &	0.032 &	0.856 &	0.033 \\
IF-LOF \cite{cheng2019outlier}	&Trees	       & 0.795 &0.122 & 0.610 &	0.012 &	0.603 &	0.012 &	0.619 &	0.013 \\
iForest	\cite{liu2008isolation} &Trees	       & 0.881 &0.259 & 0.859 &	0.033 &	0.838 &	0.030 &	0.836 &	0.029 \\
COPOD \cite{li2020copod}	&Distribution  & 0.884 &0.269 & 0.809 &	0.024 &	0.465 &	0.009 &	0.808 &	0.024 \\
PCC \cite{shyu2003novel}   &Reconstruction& 0.757 &0.240 & 0.874 &	0.038 &	0.600 &	0.012 &	0.870 &	0.037 \\
Torsk \cite{heim2019adaptive}  &Forecasting   & 0.585 &0.054 & 0.909 &	0.100 &	—     & —	  & 0.919 &	0.065 \\ 
FCM-\emph{w}DTW&Reconstruction& \textbf{0.904} & \textbf{0.465} & \textbf{0.993} & \textbf{0.818} & \textbf{0.974} & \textbf{0.423} & \textbf{0.952} & \textbf{0.345} \\
\hline
\end{tabular}
\label{tab: compare}
\end{table*}

\subsection{Setup}
\noindent \textbf{Environment. }
The configuration is Intel(R) Core(TM) i9-12900k CPU @3.2GHz, 32GB memory, Ubuntu 20.04 OS. The programming language is Python 3.8.

\noindent \textbf{Datasets. }
\begin{figure}[ht]
\begin{minipage}{.49\linewidth}
  \centering
  \centerline{\includegraphics[width=4.5cm]{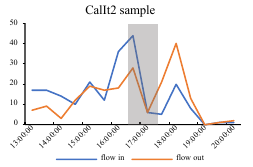}}
\end{minipage}
\hfill
\begin{minipage}{0.49\linewidth}
  \centering
  \centerline{\includegraphics[width=4.5cm]{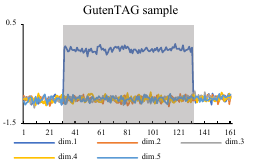}}
\end{minipage}
\caption{Calit2 and Gutentag samples.}
\label{fig: visual}
\end{figure}
For estimating the anomaly detection accuracy, we consider four real-world datasets, namely CalIt \cite{asuncion2007uci} and PCSO5, PCSO10, and PCSO20 from GutenTAG \cite{schmidl2022anomaly}. CalIt2 comes from the data streams of people flowing in and out of the building of University of California at Irvine over 15 weeks, 48 time slices per day (half-hour count aggregates). The purpose is to predict the presence of an event such as a conference in the building that is reflected by unusually high people counts for that day/time period. GutenTAG is actually a time series anomaly generator that consists of single or multiple channels containing a base oscillation with a large variety of well-labeled anomalies at different positions. It generated time series of five base types (sine, ECG, random walk, cylinder bell funnel, and polynomial) with different lengths, variances, amplitudes, frequencies, and dimensions. The selection of injected anomalies covers nine different types. Fig. \ref{fig: visual} exhibits the samples from both data collections. In the CalIt2 sample, the relationship between two dimensions turns over with large amplitude in the shaded area, indicating a conference as an anomaly event. In the GutenTAG sample, the first dimension is injected with polynomial amplitude anomalies within the shaded area.

\noindent \textbf{Parameters. } 
To guarantee clustering robustness, the cluster centers of FCM-\emph{w}DTW are initialized with density peak clustering (DPC) \cite{rodriguez2014clustering}. By analysis ahead, the value ranges of the two hyper-parameters $m$ and $q$ are $(1.0, 2.0]$ and $[-10, 0)\cup(1, 10]$, the adjustment steps are 0.3 and 2 respectively. In addition, the sliding window size in anomaly detection is 16.

\noindent \textbf{Metrics. } 
We utilize two threshold-agnostic evaluation metrics. 1) AUC-ROC:  contrasts the TP rate with the FP rate (or Recall). It focuses on an algorithm’s sensitivity. 2) AUC-PR: contrasts the precision with the recall. It focuses on an algorithm’s preciseness.

\noindent \textbf{Baselines. } 
We adopt all the 11 unsupervised multivariate benchmarks published by the state-of-the-art survey work \cite{schmidl2022anomaly} (only except DBStream which has no results published), 
Table \ref{tab: compare} summarizes the name and type of each baseline.
To avoid implementation bias, we use the results from \cite{schmidl2022anomaly} directly. All baseline results are gained by optimizing the parameters globally for the best average AUC-ROC score.
\subsection{Accuracy}
We test the parameters of cluster number $c$ with values \{10, 20, 30, 40, 50\}. The optimal anomaly detection results of FCM-\emph{w}DTW and the benchmark results are reported in Table 1 respectively.
From the results, we note that for distance-based methods like LOF, KNN, and CBLOF, their performances are relatively worse. The reason behind this may be the sensitivity of Euclidean distance to noise.
Additionally, tree-based methods achieve a relatively high ROC-AUC on most datasets. However, they tend to have lower PR-AUC, indicating that they struggle to achieve a good trade-off between precision and recall.
In contrast, FCM-\emph{w}DTW not only achieves the best ROC-AUC but also shows a decent PR-AUC. This indicates that FCM-\emph{w}DTW is capable of achieving a favorable trade-off between the TP rate TPR and FP rate, as well as between precision and recall. Overall, FCM-\emph{w}DTW outperforms all other methods on four datasets in terms of both AUC-ROC and AUC-PR, signifying its effectiveness and robustness in accurately detecting anomalies.

\subsection{Runtime}
We compare the real runtime of FCM-\emph{w}DTW against seven clustering benchmarks \cite{li2020fuzzy}, including CD, PDC, FCFW, FCMD-DTW, PAM-DTW, GAK-DBA and soft-DTW, on datasets CMUsubject16 and ECG \cite{baydogan2015multivariate}. The results are shown in Fig.\ref{fig: runtime}. Each method is repeated 100 times to calculate the average runtime. On CMUsubject16, except GAK-DBA, the runtime of all other methods is in the same order of magnitude. On ECG, except soft-DTW, the runtime of all other methods is in the same order of magnitude. In addition, the runtime of FCM-\emph{w}DTW is relatively low on ECG but high on CMUsubject16. We note that the iterations of FCM-\emph{w}DTW on both datasets are less than 10, but the sample lengths and the numbers of dimensions are different greatly. By (\ref{equ: thm2}), the extra cost of FCM-\emph{w}DTW on CMUsubject16 is caused by the procedure of initializing cluster centers and updating weight coefficients. Overall, although a more complex distance metric is introduced, the runtime of FCM-\emph{w}DTW remains comparable to that of other clustering methods.
\begin{figure}[t]
\begin{minipage}[]{.49\linewidth}
  \centering
  \centerline{\includegraphics[width=4.5cm]{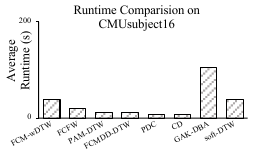}}
\end{minipage}
\hfill
\begin{minipage}[]{0.49\linewidth}
  \centering
  \centerline{\includegraphics[width=4.5cm]{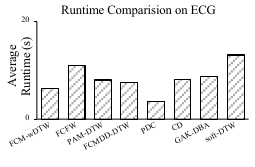}}
\end{minipage}
\vspace{-0.1in}
\caption{Runtime comparison of different clustering methods on CMUsubject16 and ECG.}
\label{fig: runtime}
\end{figure}

\section{CONCLUSION}
In this work, we propose an unsupervised distance metric learning method based on FCM and \emph{w}DTW for multivariate anomaly detection. Our method solves the objective function in closed form for the reformulated optimization problem and builds an efficient optimization algorithm. The anomalies are identified by reconstructing data from the learned optimal latent space. Comprehensive experiments demonstrate the significant superiority of our methods. Future work will focus on examining its effectiveness on more practical problems, e.g., network performance monitoring, abnormal account detection, and attack behavior identification.

\vfill\pagebreak

\bibliographystyle{IEEEbib}
\bibliography{ref}

\end{document}